\definecolor{yelloworange}{RGB}{255, 153, 0}
\definecolor{ultramarineblue}{RGB}{65, 102, 245}
\newtheorem{theorem}{Theorem}
\newtheorem{claim}{Claim}
\newcommand*{\qed}{\hfill\ensuremath{\square}}%
\newenvironment{proof}[1][Proof:]{\begin{trivlist}
\item[\hskip \labelsep {\bfseries #1}]}{\end{trivlist}}
 \def\blambda{{\pmb{\lambda}}}
 \def\btheta{{\pmb{\theta}}}
\def\b0{{\pmb{0}}}
  \def\bp{{\boldsymbol{p}}} 
   \def\bx{{\boldsymbol{x}}}
   \def\by{{\boldsymbol{y}}}
   \def\bz{{\boldsymbol{z}}}
\def\bD{{\boldsymbol{D}}}   
\def\bF{{\boldsymbol{F}}}  \def\bT{{\boldsymbol{T}}}
\title{Training Generative Adversarial Networks via Primal-Dual Subgradient Methods: A Lagrangian Perspective on GAN}
\author{\textsuperscript{$\dagger$}Xu Chen , \textsuperscript{$\ddagger$}Jiang Wang,  \textsuperscript{$\dagger$}Hao Ge \thanks{ The first two authors have equal contributions.} \\
\textsuperscript{$\dagger$} Department of EECS,
Northwestern University,
Evanston, IL, USA \\
\textsuperscript{$\ddagger$} Google Inc. \\
\texttt{\{chenx,haoge2013\}@u.northwestern.edu} \\
\texttt{wangjiangb@gmail.com }
}
\begin{document}

\maketitle

\begin{abstract}
We relate the minimax game of generative adversarial networks (GANs) to finding the saddle points of the Lagrangian function for a convex optimization problem, where the discriminator outputs and the distribution of generator outputs play the roles of primal variables and dual variables, respectively.
This formulation shows the connection between the standard GAN training process and the primal-dual subgradient methods for convex optimization. The inherent connection does not only provide a theoretical convergence proof for training GANs in the function space, but also inspires a novel objective function for training. The modified objective function forces the distribution of generator outputs to be updated along the direction according to the primal-dual subgradient methods. A toy example shows that the proposed method is able to resolve mode collapse, which in this case cannot be avoided by the standard GAN or Wasserstein GAN.
Experiments on both Gaussian mixture synthetic data and real-world image datasets demonstrate the performance of the proposed method on generating diverse samples.
\end{abstract}

\section{Introduction}

Generative adversarial networks (GANs) are a class of game theoretical methods for learning data distributions. It trains the generative model by maintaining two deep neural networks, namely the discriminator network $D$ and the generator network $G$. The generator aims to produce samples resembling real data samples, while the discriminator aims to distinguish the generated samples and real data samples.

The standard GAN training procedure is formulated as the following minimax game:
\begin{align}\label{eq:gan}
\min_{G} \max_{D} \mathsf{E}_{\bx \sim p_d(\bx)} \{\log D(\bx) \} + \mathsf{E}_{\bz \sim p_z(\bz)} \{\log (1- D(G(\bz)) ) \},
\end{align}
where $p_d(\bx)$ is the data distribution and $p_z(\bz)$ is the noise distribution.  
The generated samples $G(\bz)$ induces a generated distribution $p_g (\bx)$. Theoretically, the optimal solution to \eqref{eq:gan} is $p^*_g = p_d$ and $D^* (\bx) = 1/2$ for all $\bx$ in the support of data distribution. 

In practice, the discriminator network and the generator network are parameterized by $\btheta_d$ and $\btheta_g$, respectively. The neural network parameters are updated iteratively according to gradient descent. In particular, the discriminator is first updated either with multiple gradient descent steps until convergence or with a single gradient descent step, then the generator is updated with a single descent step. However, the analysis of the convergence properties on the training approaches is challenging, as noted by Ian Goodfellow in \citep{goodfellow2016nips}, ``For GANs, there is no theoretical prediction as to whether simultaneous gradient descent should converge or
not. Settling this theoretical question, and developing algorithms guaranteed to converge, remain important open research problems.". There have been some recent studies on the convergence behaviours of GAN training \citep{nowozin2016f,li2017towards,heusel2017gans,nagarajan2017gradient,mescheder2017numerics}. The simultaneous gradient descent method is proved to converge assuming the objective function is convex-concave in the network parameters~\citep{nowozin2016f}. The local stability property is established in~\citep{heusel2017gans,nagarajan2017gradient}. 

One notable inconvergence issue with GAN training is referred to as mode collapse, where the generator characterizes only a few
modes of the true data distribution~\citep{goodfellow2014generative,li2017towards}. Various methods have been proposed to alleviate the mode collapse problem. Feature matching for intermediate layers of the discriminator has been proposed in~\citep{salimans2016improved}.  In \citep{metz2016unrolled}, the generator is updated based on a sequence of previous unrolled discriminators. A mixture of neural networks are used to generate diverse samples~\citep{tolstikhin2017adagan,hoang2017multi,arora2017generalization}. 
In \citep{arjovsky2017towards}, it was proposed that adding noise perturbation on the inputs to the discriminator can alleviate the mode collapse problem. It is shown that this training-with-noise technique is equivalent to adding a regularizer on the gradient norm of the discriminator \citep{roth2017stabilizing}.
The Wasserstein divergence is proposed to resolve the problem of incontinuous divergence when the generated distribution and the data distribution have disjoint supports~\citep{arjovsky2017wasserstein,gulrajani2017improved}. Mode regularization is used in the loss function to penalize the missing modes~\citep{che2016mode,srivastava2017veegan}. The regularization is usually based on heuristics, which tries to minimize the distance between the data samples and the generated samples, but lacks theoretical convergence guarantee. 

In this paper, we formulate the minimax optimization for GAN training~\eqref{eq:gan} as finding the saddle points of the Lagrangian function for a convex optimization problem. In the convex optimization problem, the discriminator function $D(\cdot)$ and the probabilities of generator outputs $p_g (\cdot)$ play the roles of the primal variables and dual variables, respectively. This connection not only provides important insights in understanding the convergence of GAN training, but also enables us to leverage the primal-dual subgradient methods to design a novel objective function that helps to alleviate mode collapse. A toy example reveals that for some cases when standard GAN or WGAN inevitably leads to mode collapse, our proposed method can effectively avoid mode collapse and converge to the optimal point.

In this paper, we do not aim at achieving superior performance over other GANs, but rather provide a new perspective of understanding GANs, and propose an improved training technique that can be applied on top of existing GANs. The contributions of the paper are as follows:
\begin{itemize}
\item The standard training of GANs in the function space is formulated as primal-dual subgradient methods for solving convex optimizations.  
\item This formulation enables us to show that with a proper gradient descent step size, updating the discriminator and generator probabilities according to the primal-dual algorithms will provably converge to the optimal point. 
\item This formulation results in a novel training objective for the generator. With the proposed objective function, the generator is updated such that the probabilities of generator outputs are pushed to the optimal update direction derived by the primal-dual algorithms. Experiments have shown that this simple objective function can effectively alleviate mode collapse in GAN training.
\item The convex optimization framework incorporates different variants of GANs including the family of $f$-GAN~\citep{nowozin2016f} and an approximate variant of WGAN. For all these variants, the training objective can be improved by including the optimal update direction of the generated probabilities. 
\end{itemize}

\section{Primal-Dual Subgradient Methods for Convex Optimization}

In this section, we first describe the primal-dual subgradient methods for convex optimization. Later, we explicitly construct a convex optimization and relate the subgradient methods to standard GAN training. Consider the following convex optimization problem:
\begin{subequations}\label{eq:convex}
\begin{align}
\text{maximize  } &  f_0(\bx) \\
\text{subject to  }  & f_i (\bx) \geq 0, i=1, \cdots, \ell\\
& \bx \in X,
 \end{align}
\end{subequations}
where $\bx \in \mathbb{R}^k$ is a length-$k$ vector, $X$ is a convex set, and $f_i (\bx)$, $i = 0 \cdots, \ell$, are concave functions mapping from $\mathbb{R}^k$ to $\mathbb{R}$. The Lagrangian function is calculated as 
\begin{align}
L(\bx, \blambda) = f_0 (\bx) + \sum_{i=1}^{\ell} \lambda_i f_i(\bx).
\end{align}

In the optimization problem, the variables $\bx \in \mathbb{R}^k$ and $\blambda \in \mathbb{R}_+^{\ell}$ are referred to as primal variables and dual variables, respectively. The primal-dual pair $(\bx^{\ast},\blambda^{\ast})$ is a saddle-point of the Lagrangian fuction, if it satisfies:
\begin{align}
 L (\bx^{\ast},\blambda^{\ast})  = \min_{\blambda \geq 0} \max_{\bx \in X} L(\bx, \blambda).
\end{align}

Primal-dual subgradient methods have been widely used to solve the convex optimization problems, where the primal and dual variables are updated iteratively, and converge to a saddle point~\citep{nedic2009subgradient,komodakis2015playing}. 

There are two forms of algorithms, namely dual-driven algorithm and primal-dual-driven algorithm. 
For both approaches, the dual variables are updated according to the subgradient of $L(\bx {(t)},\blambda {(t)})$ with respect to $\blambda {(t)}$ at each iteration $t$. For the dual-driven algorithm, the primal variables are updated to achieve maximum of $L(\bx, \blambda {(t)})$ over $\bx$. For the primal-dual-driven algorithm, the primal variables are updated according to the subgradient of $L(\bx {(t)},\blambda {(t)})$ with respect to $\bx {(t)}$.
The iterative update process is summarized as follows:

\begin{align}
 \label{eq:primal_subgradient} &\bx {(t+1)} =\left\{
\begin{array}{cc} 
 \arg\max_{\bx \in X} L\left( \bx, \blambda {(t)} \right) & \text{(dual-driven algorithm)} \\
\mathcal{P}_X \left[ \bx {(t)} +  \alpha {(t)} \partial_{\bx} L\left( \bx {(t)}, \blambda {(t)} \right)  \right] & \text{(primal-dual-driven algorithm)}
\end{array} \right. \\
 \label{eq:dual_subgradient} &\blambda {(t+1)} = \left[ \blambda {(t)}  - 	\alpha {(t)} \partial_{\blambda} L\left( \bx {(t)}, \blambda {(t)} \right)  \right]_{+},
\end{align}
where $\mathcal{P}_X (\cdot)$ denotes the projection on set $X$ and $(x)_{+} = \max(x,0)$.

The following theorem proves that the primal-dual subgradient methods will make the primal and dual variables converge to the optimal solution of the convex optimization problem. 

\begin{theorem}\label{theorem:saddle_point}
Consider the convex optimization~\eqref{eq:convex}. Assume the set of saddle points is compact. Suppose $f_0(\bx)$ is a strictly concave function over $\bx \in X$ and the subgradient at each step is bounded. There exists some step size $\alpha^{(t)}$ such that both the dual-driven algorithm and the primal-dual-driven algorithm yield $\bx^{(t)} \to \bx^{\ast}$ and $\blambda^{(t)} \to \blambda^{\ast}$, where $\bx^{\ast}$ is the solution to~\eqref{eq:convex}, and $\blambda^{\ast}$ satisfies
\begin{align}
L(\bx^{\ast}, \blambda^{\ast})  = \max_{\bx \in X} L\left( \bx, \blambda^{\ast} \right).
\end{align}
\end{theorem}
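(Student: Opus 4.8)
The plan is to analyze both algorithms through a single Lyapunov (distance-to-saddle-point) argument, treating the dual-driven scheme as the special case in which the primal subproblem is solved exactly. I would fix a saddle point $(\bx^{\ast}, \blambda^{\ast})$ and track the potential $V(t) = \|\bx^{(t)} - \bx^{\ast}\|^2 + \|\blambda^{(t)} - \blambda^{\ast}\|^2$, aiming to show that for a diminishing step size $\alpha^{(t)}$ satisfying $\sum_t \alpha^{(t)} = \infty$ and $\sum_t (\alpha^{(t)})^2 < \infty$ (for instance $\alpha^{(t)} = a/(b+t)$), one has $V(t) \to 0$. Note that $L(\cdot, \blambda)$ is concave in $\bx$ (since each $f_i$ is concave and $\lambda_i \geq 0$) and affine in $\blambda$, which is what makes the cross terms below controllable.

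For the primal-dual-driven algorithm, the first step is the one-iteration estimate. Since $\mathcal{P}_X$ and the projection $[\cdot]_{+}$ onto the nonnegative orthant are nonexpansive and fix $\bx^{\ast} \in X$ and $\blambda^{\ast} \geq 0$ respectively, expanding the squared norms gives
\begin{align*}
V(t+1) \leq V(t) + 2\alpha^{(t)}\big[\, \langle g_x, \bx^{(t)} - \bx^{\ast}\rangle - \langle g_\lambda, \blambda^{(t)} - \blambda^{\ast}\rangle \,\big] + (\alpha^{(t)})^2 \big( \|g_x\|^2 + \|g_\lambda\|^2 \big),
\end{align*}
where $g_x = \partial_{\bx} L(\bx^{(t)}, \blambda^{(t)})$ and $g_\lambda = \partial_{\blambda} L(\bx^{(t)}, \blambda^{(t)})$. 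The bounded-subgradient hypothesis controls the last term by $C(\alpha^{(t)})^2$ for a constant $C$. The heart of the argument is to show the bracketed cross term is nonpositive: concavity of $L(\cdot, \blambda^{(t)})$ yields the supergradient inequality $\langle g_x, \bx^{(t)} - \bx^{\ast}\rangle \leq L(\bx^{(t)}, \blambda^{(t)}) - L(\bx^{\ast}, \blambda^{(t)})$, while affinity of $L(\bx^{(t)}, \cdot)$ in $\blambda$ gives the exact identity $\langle g_\lambda, \blambda^{(t)} - \blambda^{\ast}\rangle = L(\bx^{(t)}, \blambda^{(t)}) - L(\bx^{(t)}, \blambda^{\ast})$. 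Subtracting, the bracket collapses to $L(\bx^{(t)}, \blambda^{\ast}) - L(\bx^{\ast}, \blambda^{(t)})$, which the saddle-point sandwich $L(\bx^{(t)}, \blambda^{\ast}) \leq L(\bx^{\ast}, \blambda^{\ast}) \leq L(\bx^{\ast}, \blambda^{(t)})$ (valid since $\bx^{(t)} \in X$ and $\blambda^{(t)} \geq 0$) shows to be $\leq 0$.

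Writing $\delta(t) = L(\bx^{\ast}, \blambda^{(t)}) - L(\bx^{(t)}, \blambda^{\ast}) \geq 0$, the estimate becomes $V(t+1) \leq V(t) - 2\alpha^{(t)}\delta(t) + C(\alpha^{(t)})^2$. Summing over $t$ and using $\sum_t (\alpha^{(t)})^2 < \infty$ shows that $V(t)$ converges and that $\sum_t \alpha^{(t)} \delta(t) < \infty$; combined with $\sum_t \alpha^{(t)} = \infty$ this forces $\delta(t) \to 0$ along a subsequence. I would then split $\delta(t)$ into the two nonnegative pieces $L(\bx^{\ast},\blambda^{(t)}) - L(\bx^{\ast},\blambda^{\ast})$ and $L(\bx^{\ast},\blambda^{\ast}) - L(\bx^{(t)},\blambda^{\ast})$, use strict concavity of $f_0$ (hence of $L(\cdot, \blambda^{\ast})$) to conclude that vanishing of the second piece forces $\bx^{(t)} \to \bx^{\ast}$ on that subsequence, and invoke compactness of the saddle-point set to extract $\blambda^{(t)} \to \blambda^{\ast}$. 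Since $V(t)$ converges and a subsequence tends to $0$, the whole sequence satisfies $V(t) \to 0$. The dual-driven case follows the same template, or more directly by observing that its dual update is projected subgradient descent on the convex dual function $g(\blambda) = \max_{\bx \in X} L(\bx, \blambda)$ (whose subgradients are the constraint values $f_i$ evaluated at the corresponding primal maximizer), to which the standard subgradient convergence result applies; strict concavity then makes the primal maximizer unique and continuous in $\blambda$, giving $\bx^{(t)} \to \bx^{\ast}$.

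The main obstacle is precisely this final step. The one-step inequality and the Lyapunov decrease only certify that the duality gap $\delta(t)$ vanishes, i.e. convergence of function values, not convergence of the iterates themselves. Upgrading this to $\bx^{(t)} \to \bx^{\ast}$ and $\blambda^{(t)} \to \blambda^{\ast}$ is exactly where the two structural hypotheses earn their place: strict concavity pins down a unique primal optimum and turns value convergence into primal iterate convergence, while compactness of the saddle-point set prevents the dual iterates—for which $L$ is only affine, so no strict-convexity leverage is available—from drifting off and secures the dual limit. Care is also needed to make a single step-size sequence serve both the summability needed for boundedness and the non-summability needed to drive $\delta(t)$ to zero.
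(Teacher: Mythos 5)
Your proposal follows essentially the same route as the paper's proof: the same square-summable-but-not-summable step size, the same nonexpansive-projection one-step estimate on the squared distance to a saddle point, the same use of concavity in $\bx$ and convexity (affinity) in $\blambda$ together with the saddle-point inequality to make the cross term nonpositive, and the same F\'ejer-monotonicity-plus-convergent-subsequence conclusion. The only cosmetic difference is that the paper measures the dual distance to the nearest point of the (possibly non-singleton) set $\Phi$ of dual optima rather than to a fixed $\blambda^{\ast}$, and both arguments share the same delicate step you already flag---upgrading vanishing of the gap, which is only affine in $\blambda$, to convergence of the dual iterates.
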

\begin{proof}
See Appendix~\ref{append:saddle_point}.
\end{proof}


\section{Training GAN via Primal-Dual Subgradient Methods}

\subsection{GAN as a convex optimization}

We explicitly construct a convex optimization problem and relate it to the minimax game of GANs. We assume that the source data and generated samples belong to a finite set $\{\bx_1, \cdots, \bx_n\}$ of arbitrary size $n$. The extension to uncountable sets can be derived in a similar manner~\citep{luenberger1997optimization}. The finite case is of particular interest, because any real-world data has a finite size, albeit the size could be arbitrarily large.

We construct the following convex optimization problem:
\begin{subequations}\label{eq:convex_gan}
\begin{align}
\text{maximize  } & \sum_{i=1}^n  p_d (\bx_i) \log (D_i) \\
\label{eq:convex_gan_constraint}\text{subject to  }  &  \log (1-D_i) \geq \log (1/2 ), i=1, \cdots, n \\
& \bD \in \mathcal{D},
 \end{align}
\end{subequations}
where $\mathcal{D}$ is some convex set.
The primal variables are $\bD = (D_1, \cdots, D_n)$, where $D_i$ is defined as $D_i = D(\bx_i)$. Let $\bp_g = (p_g (\bx_1), \cdots, p_g (\bx_n))$, where $p_g (\bx_i) $ is the Lagrangian dual associated with the $i$-th constraint. The Lagrangian function is thus
\begin{align}\label{eq:gan_lagrangian}
L(\bD, \bp_g) =  \sum_{i=1}^n  p_d (\bx_i) \log (D_i)  +  \sum_{i=1}^n p_g (\bx_i)  \log (2(1-D_i)), \bD \in \mathcal{D}.
\end{align}

When $\mathcal{D} = \{\bD: 0 \leq D_i \leq 1, \forall i \}$, finding the saddle points for the Lagrangian function is exactly equivalent to solving the GAN minimax problem\eqref{eq:gan}.
This inherent connection enables us to utilize the primal-dual subgradient methods to design update rules for $D(\bx)$ and $p_g (\bx)$ such that they converge to the saddle points. The following theorem provides a theoretical guideline for the training of GANs.
\begin{theorem}\label{theorem:gan_converge}
Consider the Lagrangian function given by \eqref{eq:gan_lagrangian} with $\mathcal{D} = \{\bD: \epsilon \leq D_i \leq 1- \epsilon, \forall i \}$, where $0 < \epsilon < 1/2$. If the discriminator and generator have enough capacity, and the discriminator output and the generated distribution are updated according to the primal-dual update rules \eqref{eq:primal_subgradient} and \eqref{eq:dual_subgradient} with $(\bx, \blambda) = (\bD, \bp_g)$, then $p_g (\cdot)$ converges to $p_d (\cdot)$.
\end{theorem}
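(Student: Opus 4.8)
The plan is to recognize problem~\eqref{eq:convex_gan} as a concrete instance of the abstract convex program~\eqref{eq:convex} and then invoke Theorem~\ref{theorem:saddle_point} directly, reserving the genuinely new work for identifying the saddle point. First I would check that~\eqref{eq:convex_gan} satisfies the hypotheses of Theorem~\ref{theorem:saddle_point}: the objective $f_0(\bD) = \sum_i p_d(\bx_i) \log D_i$ is concave (strictly so on the coordinates where $p_d(\bx_i) > 0$; one either assumes $p_d$ has full support on the finite set or restricts attention to it), each constraint $\log(1 - D_i) - \log(1/2)$ is concave in $D_i$, and $\mathcal{D} = \{\bD : \epsilon \leq D_i \leq 1 - \epsilon\}$ is a compact convex box. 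The multipliers associated with the $\ell = n$ constraints are exactly the entries of $\bp_g$, so the update rules~\eqref{eq:primal_subgradient}--\eqref{eq:dual_subgradient} with $(\bx,\blambda) = (\bD,\bp_g)$ are precisely the primal-dual iterations of Theorem~\ref{theorem:saddle_point}.

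Next I would verify the two quantitative hypotheses, and here the role of $\epsilon$ becomes clear. Because $D_i$ is confined to $[\epsilon, 1-\epsilon]$, both subgradients are bounded: $\partial_{D_i} L = p_d(\bx_i)/D_i - p_g(\bx_i)/(1 - D_i)$ has denominators bounded below by $\epsilon$, while $\partial_{p_g(\bx_i)} L = \log(2(1 - D_i))$ lies in the bounded interval $[\log(2\epsilon), \log(2(1-\epsilon))]$. Compactness of $\mathcal{D}$ together with strict concavity of $f_0$ guarantees a well-defined primal maximizer at each step, and I would argue the saddle-point set is compact by bounding the optimal multipliers (which turn out to equal $p_d$). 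The hypothesis that the networks have \emph{enough capacity} is used to assert that the function-space iterates $\bD^{(t)}$ and $\bp_g^{(t)}$ can actually be realized by the parameterized networks, so the abstract convergence transfers to the trained models.

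With the hypotheses in place, Theorem~\ref{theorem:saddle_point} yields $\bD^{(t)} \to \bD^\ast$ and $\bp_g^{(t)} \to \bp_g^\ast$ for a saddle point $(\bD^\ast, \bp_g^\ast)$. The substantive step is then to solve for this saddle point explicitly. Setting $\partial_{D_i} L = 0$ gives the optimal discriminator $D_i^\ast = p_d(\bx_i)/(p_d(\bx_i) + p_g^\ast(\bx_i))$. Because the primal objective is increasing in each $D_i$ while the feasible region forces $D_i \leq 1/2$, the constraint is active at the optimum, so $D_i^\ast = 1/2$ for every $i$ in the support (this is where $\epsilon < 1/2$ matters, since $1/2$ must lie in the interior of $\mathcal{D}$). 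Equating the two expressions for $D_i^\ast$ then gives $p_g^\ast(\bx_i) = p_d(\bx_i)$, i.e. $\bp_g^\ast = p_d$.

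The main obstacle I anticipate is not the mechanical application of Theorem~\ref{theorem:saddle_point} but the two places where the GAN structure strains the abstract framework: (i) strict concavity of $f_0$ fails on coordinates outside the support of $p_d$, so one must argue separately (via complementary slackness on the inactive constraints) that the corresponding generated probabilities vanish; and (ii) the multipliers $\bp_g$ are a priori only nonnegative, whereas a bona fide generated distribution must sum to one. The cleanest resolution is to observe that the identified optimum $\bp_g^\ast = p_d$ is automatically a valid probability distribution, so no normalization constraint need be imposed during the iterations—the dual dynamics drive $\bp_g$ to a normalized limit on their own. Establishing compactness of the saddle-point set rigorously, which Theorem~\ref{theorem:saddle_point} takes as a hypothesis, is the remaining technical point I would need to discharge.
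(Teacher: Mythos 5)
Your proposal follows essentially the same route as the paper's own proof: cast \eqref{eq:convex_gan} as an instance of \eqref{eq:convex}, verify boundedness of the subgradients via the $[\epsilon,1-\epsilon]$ projection and (strict) concavity of the objective, invoke Theorem~\ref{theorem:saddle_point}, read off $D_i^{\ast}=1/2$ from the active constraint, and use interior stationarity $\partial_{D_i}L(\bD^{\ast},\bp_g^{\ast})=0$ to conclude $p_g^{\ast}=p_d$. Your closing remarks on where strict concavity fails off the support of $p_d$ and on the compactness of the saddle-point set identify genuine gaps that the paper's proof silently assumes away, but they do not change the argument's structure.
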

\begin{proof}

The optimization problem~\eqref{eq:convex_gan} is a particularized form of \eqref{eq:convex}, where $f_0 (\cdot) = \sum_{i=1}^n  p_d (\bx_i) \log (D_i) $, $f_i(\cdot) = \log(1-D_i)$ and $X = [\epsilon, 1-\epsilon]^n$. The objective function is strictly concave over $\bD$. Moreover, since $\bD$ is projected onto the compact set  $[\epsilon, 1-\epsilon]$ at each iteration $t$,  the subgradients $\partial f_i(\bD^{(t)})$ are bounded. The assumptions of Theorem~\ref{theorem:saddle_point} are satisfied.

Since the constraint \eqref{eq:convex_gan_constraint} gives an upper bound of $D_i \leq 1/2$, the solution to the above convex optimization is obviously $D_i^{\ast} = 1/2$, for all $i = 1, \cdots, n$. Since the problem is convex, the optimal primal solution is the primal saddle point of the Lagrangian function~\citep[Chapter 5]{bertsekas1999nonlinear}. Moreover, any primal-dual saddle point ($\bD^{\ast},\bp^{\ast}_g$) satisfies $L(\bD^{\ast}, \bp_g^{\ast})  = \max_{\bD \in \mathcal{D}} L\left( \bD, \bp_g^{\ast} \right)$. Since $\bD^{\ast}$ is strictly inside $\mathcal{D}$, we have $\partial_{\bD} L(\bD^*, \bp^{\ast}_g) = 0$. Since $\partial_{D_i} L(\bD^*, \bp^{\ast}_g) = 2 p_d(\bx_i) - 2 p_g^{\ast}(\bx_i) $, we have $p_g^{\ast} = p_d$, and the saddle point is unique.
By Theorem~\ref{theorem:saddle_point}, the primal-dual update rules will guarantee convergence of $\left( \bD^{(t)}, \bp_g^{(t)} \right)$ to the primal-dual saddle point ($\bD^{\ast},\bp^{\ast}_g$).  \qed
\end{proof}

It can be seen that the standard training of GAN corresponds to either dual-driven algorithm~\citep{nowozin2016f} or primal-dual-driven algorithm~\citep{arjovsky2017wasserstein,goodfellow2014generative}. A natural question arises: Why does the standard training fail to converge and lead to mode collapse? As will be shown later, the underlying reason is that standard training of GANs in some cases do not update the generated distribution according to \eqref{eq:dual_subgradient}. Theorem~\ref{theorem:gan_converge} inspires us to propose a training algorithm to tackle this issue.

\subsection{Algorithm Description}

\begin{algorithm}
\caption{Training GAN via Primal-Dual Subgradient Methods}
\label{algo:training_gan}
\begin{algorithmic}[]
\STATE{\textbf{Initialization}: Choose the objective function $f_0(\cdot) $ and constraint function $f_1 (\cdot)$ according to the GAN realization. For the original GAN based on Jensen-Shannon divergence, $f_0 (D) =  \log \left( D \right) $ and $f_1(D) = \log(2(1-D))$.}
\WHILE{the stopping criterion is not met}
\STATE Sample minibatch $m_1$ data samples $\bx_1, \cdots, \bx_{m_1}$.
\STATE Sample minibatch $m_2$ noise samples $\bz_1, \cdots, \bz_{m_2}$.
\FOR{$k=1,\cdots, k_0$}
\STATE Update the discriminator parameters with gradient ascent:
\begin{align}\label{eq:update_disc}
\small
 \triangledown_{\btheta_d} \left[ \frac{1}{m_1}  \sum_{i=1}^{m_1} f_0 (D(\bx_i))+ \frac{1}{m_2}  \sum_{j=1}^{m_2} f_1 \left(  D \left( G\left( \bz_j \right) \right) \right) \right].
\end{align}
\ENDFOR
\STATE Update the target generated distribution as:
\begin{align}\label{eq:update_pg}
\tilde{p}_g (\bx_i) = p_g (\bx_i) - \alpha f_1 (D(\bx_i)), i = 1, \cdots, m_1,
\end{align}
where $\alpha$ is some step size and
\begin{align}\label{eq:calc_prob}
p_g (\bx_i) = \frac{1}{m_2} \sum_{j=1}^{m_2} k_{\sigma} (G(\bz_j)-\bx_i).
\end{align}
\STATE With $\tilde{p}_g (\bx_i) $ \textit{fixed}, update the generator parameters with gradient descent:
\begin{align}\label{eq:update_gen}
\small
\triangledown_{\btheta_g} \left[ \frac{1}{m_2}  \sum_{j=1}^{m_2} f_1 \left(D \left( G\left( \bz_j \right) \right) \right) + \frac{1}{m_1}  \sum_{i=1}^{m_1} \left( \tilde{p}_g (\bx_i)  - \frac{1}{m_2} \sum_{j=1}^{m_2} k_{\sigma} (G(\bz_j)-\bx_i)  \right)^2 \right].
\end{align}
\ENDWHILE
\end{algorithmic}
\end{algorithm}

First, we present our training algorithm. Later, we will use a toy example to give intuitions of why our algorithm is effective to avoid mode collapse.

The algorithm is described in Algorithm~\ref{algo:training_gan}. The maximum step of discriminator update is $k_0$. In the context of primal-dual-driven algorithms, $k_0 =1$. In the context of dual-driven algorithms, $k_0$ is some large constant, such that the discriminator is updated till convergence at each training epoch.
The update of the discriminator is the same as standard GAN training. The main difference is the modified loss function for the generator update~\eqref{eq:update_gen}. The intuition is that when the generated samples have disjoint support from the data, the generated distribution at the data support may not be updated using standard training. This is exactly one source of mode collapse. Ideally, the modified loss function will always update the generated probabilities at the data support along the optimal direction.


The generated probability mass at $\bx$ is
$p_g(\bx) =\frac{1}{m}  \sum_{i=1}^m 1\{G(\bz_i) = \bx \} $, where $1\{\cdot\}$ is the indicator function.
The indicator function is not differentiable, so we use a continuous kernel to approximate it. Define
\begin{align}\label{eq:gauss_kernel}
k_{\sigma}( \bx) = e^{- \frac{ ||\bx||^2 }{\sigma^2} },
\end{align}
where $\sigma$ is some positive constant. The constant $\sigma$ is also called bandwidth for kernel density estimation. The empirical generated distribution is thus approximately calculated as \eqref{eq:calc_prob}. There are different bandwidth selection methods~\citep{botev2010kernel,hall1991optimal}. It can be seen that as $\sigma \to 0$, $k_{\sigma}( \bx -\by) $ tends to the indicator function, but it will not give large enough gradients to far areas that experience mode collapse. A larger $\sigma$ implies a coarser quantization of the space in approximating the distribution. In practical training, the kernel bandwidth can be set larger at first and gradually decreases as the iteration continues.

By the dual update rule~\eqref{eq:dual_subgradient} , the generated probability of every $\bx_i$ should be updated as
\begin{align}
\label{eq:pg_update} \tilde{p}_g (\bx_i) &= p_g (\bx_i) - \alpha \frac{\partial L(\bD, \bp_g)}{\partial p_g(\bx_i)} \\
& = p_g (\bx_i) - \alpha \log(2(1-D(\bx_i))).
\end{align}
This motivates us to add the second term of~\eqref{eq:update_gen} in the loss function, such that the generated distribution is pushed towards the target distribution \eqref{eq:pg_update}.

Although having good convergence guarantee in theory, the non-parametric kernel density estimation of the generated distribution may suffer from the curse of dimension. Previous works combining kernel learning and the GAN framework have proposed methods to scale the algorithms to deal with high-dimensional data, and the performances are promising~\citep{li2015generative,li2017mmd,sinn2017towards}. One common method is to project the data onto a low dimensional space using an autoencoder or a bottleneck layer of a pretrained neurual network, and then apply the kernel-based estimates on the feature space. Using this approach, the estimated probability of $\bx_i$ becomes
\begin{align}\label{eq:calc_prob}
p_g (\bx_i) = \frac{1}{m_2} \sum_{j=1}^{m_2} k_{\sigma} (f_{\phi} (G(\bz_j) )-f_{\phi}( \bx_i )),
\end{align}
where $f_{\phi} (.)$ is the projection of the data to a low dimensional space. We will leave the work of generating high-resolution images using this approach as future work.

\subsection{Intuition of avoiding mode collapse}
\label{sec:mode_collapse}

Mode collapse occurs when the generated samples have a very small probability to overlap with some families of the data samples, and the discriminator $D(\cdot)$ is locally constant around the region of the generated samples. We use a toy example to show that the standard training of GAN and Wasserstein may fail to avoid mode collapse, while our proposed method can succeed.

\begin{claim}
 Suppose the data distribution is $p_d(x) =1\{x = 1\} $, and the initial generated distribution is $p_g(x) = 1\{x = 0\}$. The discriminator output $D(x) $ is some function that is equal to zero for $|x-0| \leq \delta$ and is equal to one for $|x-1|\leq \delta$, where $0 < \delta <1/2$. Standard training of GAN and WGAN leads to mode collapse.
\end{claim}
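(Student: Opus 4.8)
The plan is to show that, under the stated configuration, the gradient that standard GAN (and WGAN) training uses to update the generator is identically zero, so the generator parameters $\btheta_g$ never move and $p_g$ remains the point mass at $x=0$, never reaching the data mode at $x=1$. First I would observe that since $p_g(x) = 1\{x=0\}$, every generated sample satisfies $G(\bz)=0$, so the only values that enter the generator update are $D$ and its derivative evaluated at $x=0$. The whole argument then rests on differentiating the generator loss through the discriminator by the chain rule and using the hypothesis that $D$ is \emph{locally constant} on the support of $p_g$.

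For standard GAN the generator minimizes $\mathsf{E}_{\bz}\{\log(1 - D(G(\bz)))\}$ as in \eqref{eq:gan}, whose gradient with respect to $\btheta_g$ is
\begin{align}
\mathsf{E}_{\bz}\left\{ \frac{-D'(G(\bz))}{1 - D(G(\bz))}\, \triangledown_{\btheta_g} G(\bz) \right\}.
\end{align}
Since $G(\bz)=0$ lies in the interval $|x|\le\delta$ on which $D\equiv 0$, we have $D'(G(\bz))=D'(0)=0$, so the integrand, and hence the whole gradient, vanishes. Consequently no gradient step changes $\btheta_g$, the generator keeps outputting $x=0$, and $p_g$ never places mass near $x=1$: this is exactly mode collapse. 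I would also note that the configuration is a genuine fixed point of alternating training: given $p_g=1\{x=0\}$ and $p_d=1\{x=1\}$, the stated $D$ (with $D(0)=0$, $D(1)=1$) already maximizes the discriminator objective, so the subsequent discriminator step leaves $D$ unchanged and the deadlock persists.

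For WGAN the generator maximizes $\mathsf{E}_{\bz}\{f(G(\bz))\}$ for a $1$-Lipschitz critic $f$, with gradient proportional to $\mathsf{E}_{\bz}\{ f'(G(\bz))\, \triangledown_{\btheta_g} G(\bz)\}$; the identical conclusion follows as soon as $f$ is flat at the generated samples, i.e.\ $f'(0)=0$. The step I expect to be the main obstacle is the WGAN case, for two reasons. First, the exact two-level profile used for $D$ is not $1$-Lipschitz (it would require slope $1/(1-2\delta)>1$ between the two flat regions), so I must instead invoke a $1$-Lipschitz critic that is merely locally constant around $x=0$, which is precisely the mode-collapse regime described at the start of Section~\ref{sec:mode_collapse}, rather than the exact $D$ of the claim. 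Second, I should make explicit that this is a property of the locally constant critic and not of WGAN in general: the globally optimal critic $f(x)=x$ has $f'\equiv 1$ and would in fact push the generator toward $x=1$, so the force of the claim is that training can settle on a flat critic from which the generator receives no usable gradient.
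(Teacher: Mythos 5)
Your argument is correct and follows essentially the same route as the paper: the discriminator is already at an optimum of its objective so its gradient vanishes, and the generator gradient vanishes by the chain rule because $D$ is locally constant on the support of $p_g$, so training is stuck at $p_g = 1\{x=0\}$ and never reaches the data mode at $x=1$. Your caveat about the WGAN case is well taken and in fact sharper than the paper's own treatment: the paper invokes the Lipschitz bound $|D(1)-D(0)|\le 1$ to declare the stated $D$ optimal, yet that $D$ cannot be $1$-Lipschitz (it must climb from $0$ to $1$ over an interval of length $1-2\delta<1$), so strictly one should, as you do, replace it with an admissible critic that is merely flat in a neighborhood of the generated samples.
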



\begin{proof}
We first show that the discriminator is not updated, and then show that the generator is not updated during the standard training process.

In standard training of GAN and WGAN, the discriminator is updated according to the gradient of \eqref{eq:update_disc}. For GAN, since $0 \leq D(x) \leq 1$, the objective funtion for the discriminator is at most zero, i.e.,
\begin{align}
\mathsf{E}_{p_d} \log \left( D \left( \bx \right) \right) +\mathsf{E}_{p_g}  \log \left( 1- D \left( \bx \right) \right) & = \log (D(1)) + \log(1-D(0)) \leq 0,
\end{align}
which is achieved by the current $D(x)$ by assumption.

For WGAN, the optimal discrminator output $D(x)$ is some 1-Lipschitz function such that $\mathsf{E}_{p_d} \{D(x)\} - \mathsf{E}_{p_g} \{D(x)\} $ is maximized. Since
\begin{align}
\label{eq:lip_ub}\mathsf{E}_{p_d} \{D(x)\} - \mathsf{E}_{p_g} \{D(x)\} &= D(1) - D(0)  \leq 1,
\end{align}
where \eqref{eq:lip_ub} is due to the Lipschitz condition $|D(1) - D(0)| \leq 1$. The current $D(x)$ is obviously optimal. Thus, for both GAN and WGAN, the gradient of the loss function with respect to $\btheta_d$ is zero and the discriminator parameters are not updated.

On the other hand, in standard training, the generator parameters $\btheta_g$ are updated with only the first term of \eqref{eq:update_gen}. By the chain rule,
\begin{align}
\partial_{\btheta_g} \log \left( 1- D \left( G\left( \bz_i \right) \right) \right) &= - \frac{1}{1- D \left( G\left( \bz_i \right) \right)}\partial_{x}  D(x) \big|_{x = G(\bz_i)} \partial_{\btheta_g} G(\bz_i) \\
\label{eq:zero_gradient}& = 0,
\end{align}
where \eqref{eq:zero_gradient} is due to the assumption that $D(x)$ is locally constant for $x = 0$.
Therefore, the generator and the discriminator reach a local optimum point. The generated samples are all zeros.
\qed
\end{proof}

In our proposed training method, when $x=1$, the optimal update direction is given by~\eqref{eq:update_pg}, where $\tilde{p}_g$ is a large value because $D(1) = 1$. Therefore, by \eqref{eq:update_gen}, the second term in the loss function is very large, which forces the generator to generate samples at $G(\bz) = 1$. As the iteration continues, the generated distribution gradually converges to data distribution, and $D(x)$ gradually converges to $1/2$, which makes $\partial_{p_g(x)} L(D(x),p_g(x)) =  \log (2(1-D(x)))$ become zero. The experiment in Section~\ref{sec:experiment} demonstrates this training dynamic.

In this paper, the standard training of GANs in function space has been formulated as primal-dual updates for convex optimization. However, the training is optimized over the network parameters in practice, which typically yields a non-convex non-concave problem. Theorem~\ref{theorem:gan_converge} tells us that as long as the discriminator output and the generated distribution are updated according to the primal-dual update rule, mode collapse should not occur. This insight leads to the addition of the second term in the modified loss function for the generator  \eqref{eq:update_gen}. In Section~\ref{sec:experiment}, experiments on the above-mentioned toy example and real-world datasets show that the proposed training technique can greatly improve the baseline performance.

\section{Variants of GANs}

\begin{table}
\centering
\caption{Variants of GANs under the convex optimization framework.}
\begin{tabular}{|c|c|c|c|c|}
  \hline
  Divergence metric & $f_0 (D_i)$ & $f_1(D_i)$  & $D_i^{\ast}$  \\ \hline 
  Kullback-Leibler & $\log (D_i)$ & $1 - D_i$  & $\frac{p_d (\bx_i) }{ p_g (\bx_i) }$\\ \hline
  Reverse KL & $- D_i $ & $\log D_i $ & $\frac{p_g (\bx_i) }{ p_d (\bx_i) }$ \\ \hline
  Pearson $\chi^2$ & $D_i$ & $-\frac{1}{4} D_i^2 - D_i$ & $\frac{2 (p_d (\bx_i)-p_g (\bx_i)) }{ p_g (\bx_i) }$ \\ \hline
  Squared Hellinger $\chi^2$ & $1-D_i$ & $1- 1/D_i$ & $\sqrt{ \frac{p_g (\bx_i) }{ p_d (\bx_i) } }$\\ \hline
  Jensen-Shannon  & $\log (D_i) $ & $\log (1-D_i) - \log(1/2)$ &  $\frac{p_d (\bx_i) }{ p_d (\bx_i) +  p_g (\bx_i)}$ \\ \hline
Approximate WGAN & $D_i -  \epsilon D^2_i $  & $-D_i$ & $\frac{ p_d(\bx_i) - p_g(\bx_i)}{2 \epsilon p_d(\bx_i)} $ \\ \hline
  Other metric  & $-\frac{1}{2} D_i^2 + D_i$  & $D_i-2$ & $\frac{p_d(\bx_i) + p_g(\bx_i) }{p_d (\bx_i)} $  \\
  \hline
\end{tabular}
\label{tab:generalize_gan}
\end{table}

Consider the following optimization problem:
\begin{subequations}\label{eq:generalize_gan}
\begin{align}
\text{maximize  } & \sum_{i=1}^n  p_d (\bx_i) f_0 (D_i)\\
\text{subject to  }  &  f_1 (D_i) \geq 0 , i=1, \cdots, n,
 \end{align}
\end{subequations}
where $f_0(\cdot)$ and $f_1(\cdot)$ are concave functions. Compared with the generic convex optimization problem~\eqref{eq:convex}, the number of constraint functions is set to be the variable alphabet size, and the constraint functions are $f_i(\bD) = f_1(D_i)$, $i = 1, \cdots, n$.

The objective and constraint functions in \eqref{eq:generalize_gan} can be tailored to produce different GAN variants. For example, Table~\ref{tab:generalize_gan} shows the large family of $f$-GAN~\citep{nowozin2016f}.
The last row of Table~\ref{tab:generalize_gan} gives a new realization of GAN with a unique saddle point of $D^*(x) = 2$ and $p_g(\bx) = p_d(\bx)$.

We also derive a GAN variant similar to WGAN, which is named ``Approximate WGAN". As shown in Table~\ref{tab:generalize_gan}, the objective and constraint functions yield the following minimax problem:
\begin{align}
\min_G \max_D \mathsf{E}_{\bx \sim p_d(\bx)} \left\{ D (\bx) - \epsilon D^2 (\bx) \right\} - \mathsf{E}_{\bx \sim p_g(\bx)} \left\{ D (\bx) \right\},
\end{align}
where $\epsilon$ is an arbitrary positive constant. The augmented term $ \epsilon D^2 (\bx) $ is to make the objective function strictly concave, without changing the original solution. 
It can be seen that this problem has a unique saddle point $p^*_g(\bx) = p_d(\bx)$. As $\epsilon$ tends to 0, the training objective function becomes identical to WGAN. The optimal $D(\bx)$ for WGAN is some Lipschitz function that maximizes $ \mathsf{E}_{\bx \sim p_d(\bx)} \left\{ D (\bx) \right\} - \mathsf{E}_{\bx \sim p_g(\bx)} \left\{ D (\bx) \right\}$, while for our problem is $D^*(\bx) = 0$. Weight clipping can still be applied, but serves as a regularizer to make the training more robust~\citep{merolla2016deep}.

The training algorithms for these variants of GANs follow by simply changing the objective function $f_0(\cdot)$ and constraint function $f_1(\cdot)$ accordingly in Algorithm~\ref{algo:training_gan}.

\section{Experiments}
\label{sec:experiment}

\subsection{Synthetic data}

Fig.~\ref{fig:toy_example} shows the training performance for a toy example. The data distribution is $p_g(x) = 1\{x=1\}$. The inital generated samples are concentrated around $x=-3.0$. The details of the neural network parameters can be seen in Appendix~\ref{append:toy}. Fig.~\ref{fig:gx} shows the generated samples in the 90 quantile as the training iterates. After 8000 iterations, the generated samples from standard training of GAN and WGAN are still concentrated around $x=-3.0$. As shown in Fig.~\ref{fig:dx_gan} and \ref{fig:dx_wgan}, the discrminators hardly have any updates throughout the training process. Using the proposed training approach, the generated samples gradually converge to the data distribution and the discriminator output converges to the optimal solution with $D(1) = 1/2$.

\begin{figure}[t]
    \centering
    \begin{subfigure}[b]{0.3\textwidth}
        \centering
        \includegraphics[width=0.9\textwidth]{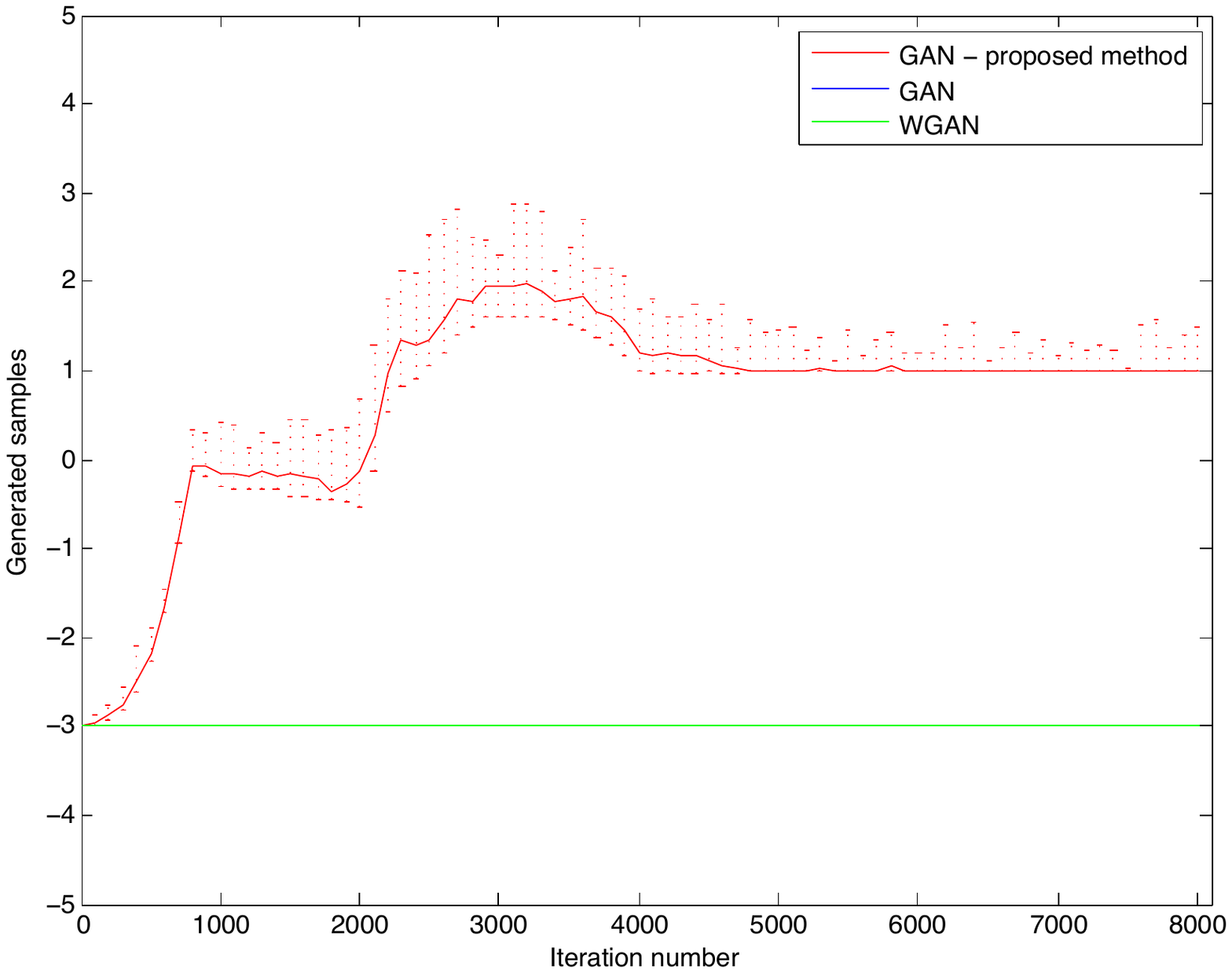}
        \caption{}
	\label{fig:gx}
    \end{subfigure}
    \begin{subfigure}[b]{0.3\textwidth}
        \centering
        \includegraphics[width=0.9\textwidth]{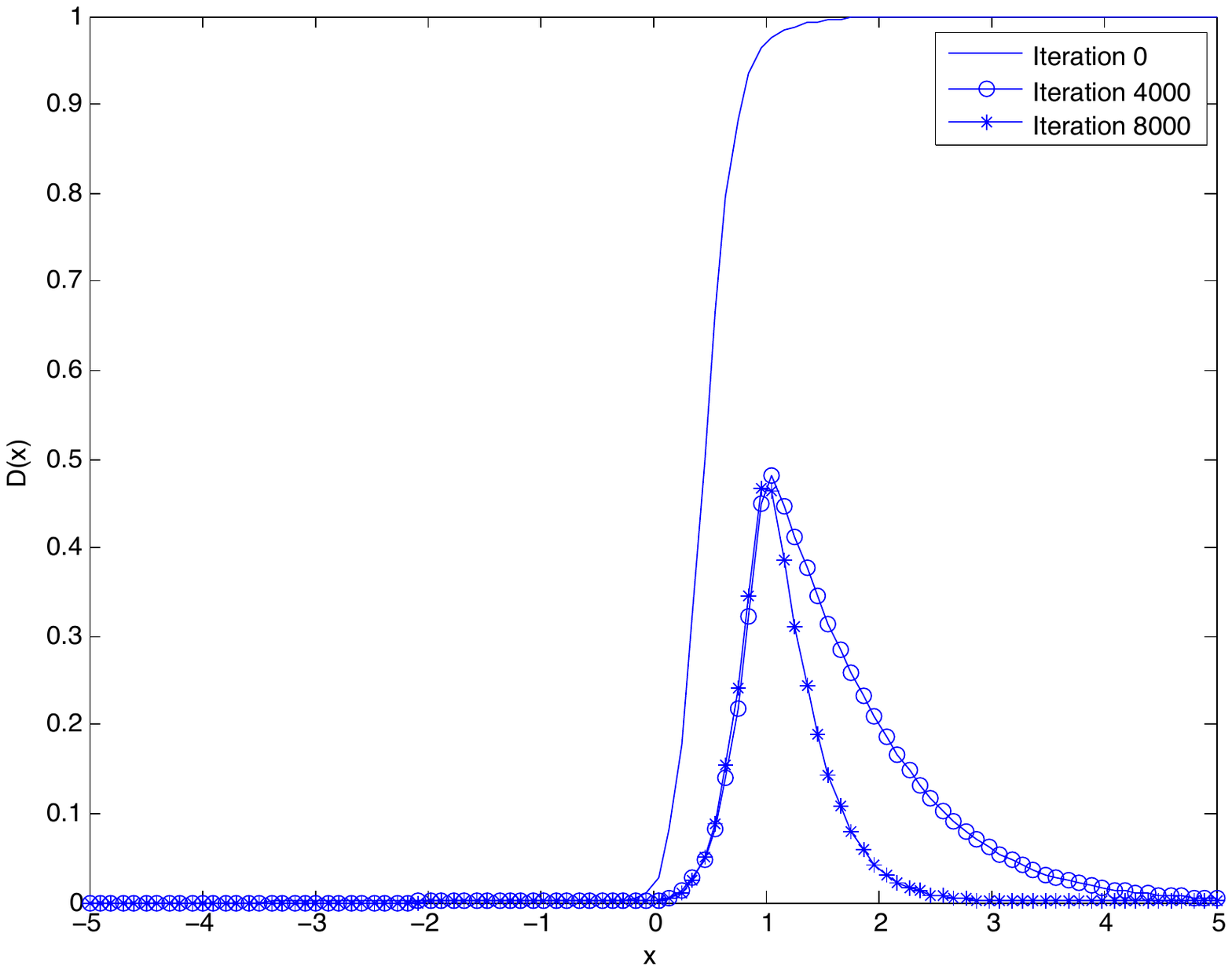}
        \caption{}
    \label{fig:dx_gradient_gan}
    \end{subfigure}
\\
\begin{subfigure}[b]{0.3\textwidth}
        \centering
        \includegraphics[width=0.9\textwidth]{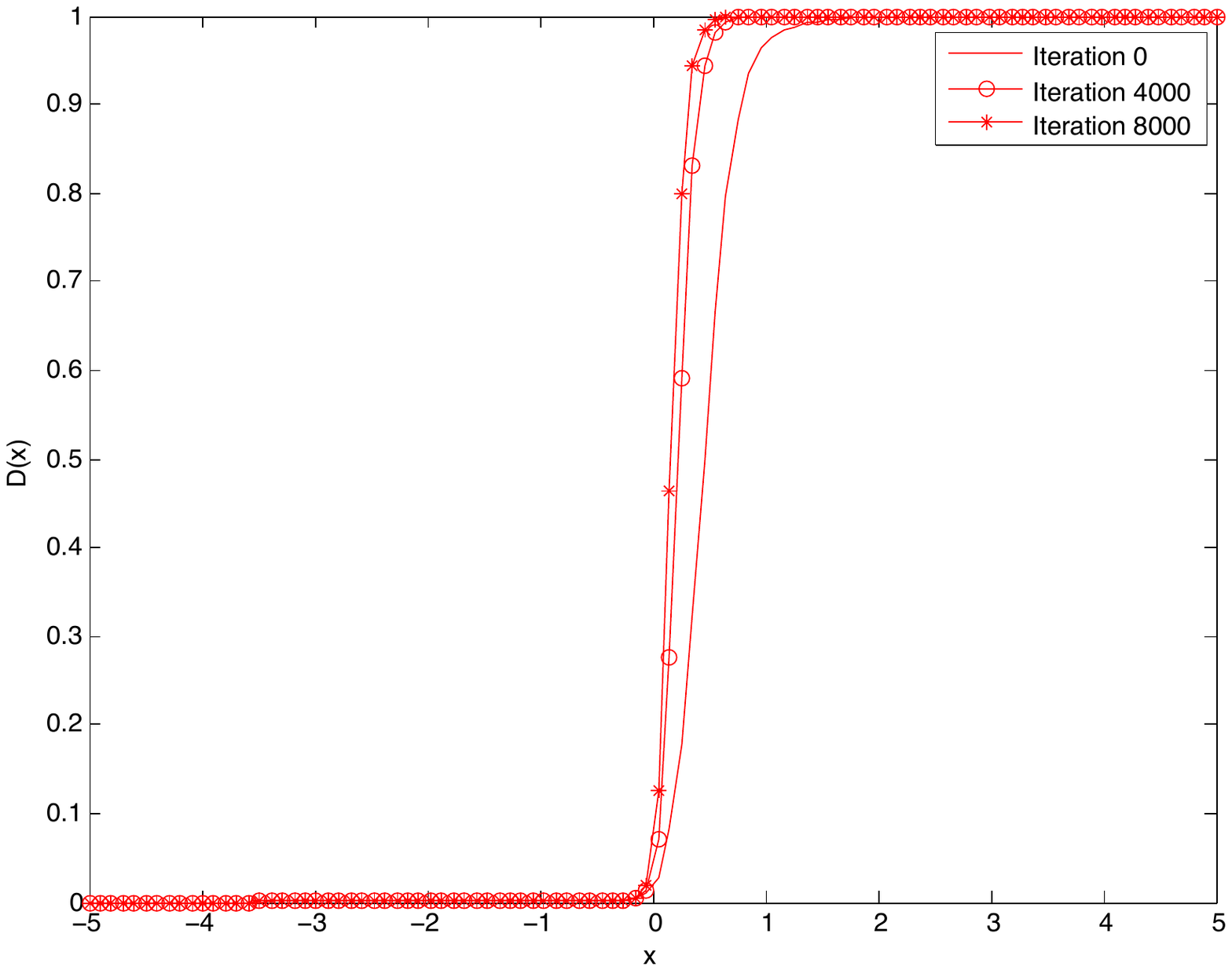}
        \caption{}
    \label{fig:dx_gan}
    \end{subfigure}
    \begin{subfigure}[b]{0.3\textwidth}
        \centering
        \includegraphics[width=0.9\textwidth]{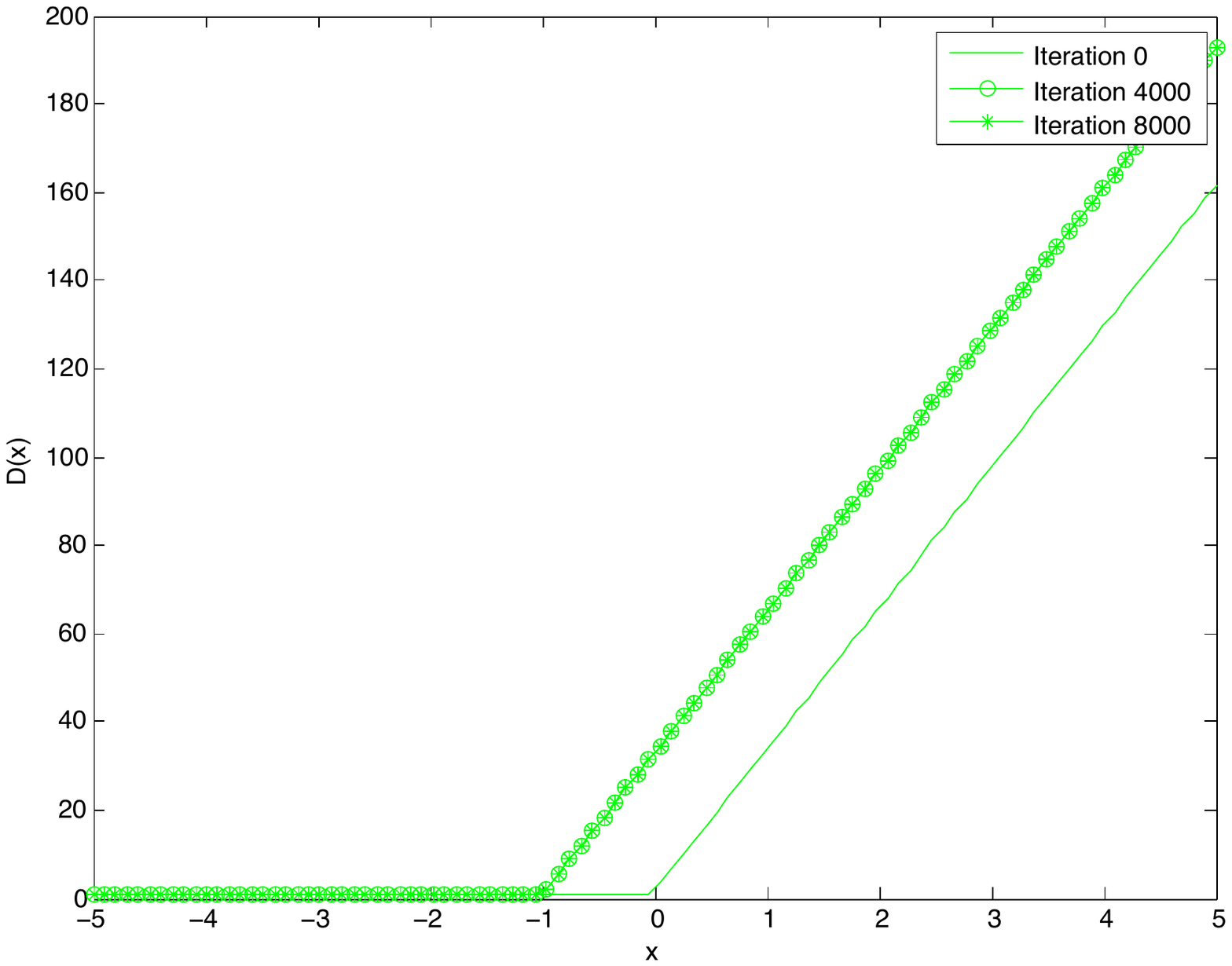}
        \caption{}
    \label{fig:dx_wgan}
    \end{subfigure}
    \caption{Performance of a toy example. Figure (a) shows the generated samples for different GANs. Figure (b) shows the discriminator output $D(x)$ for GANs trained using the proposed method. Figure (c) and (d) show the discriminator output $D(x)$ for standard GANs and WGANs.  }
\label{fig:toy_example}
\end{figure}


Fig.~\ref{fig:gauss_example} shows the performance of the proposed method for a mixture of 8 Gaussain data on a circle. While the original GANs experience mode collapse~\citep{nguyen2017dual,metz2016unrolled}, our proposed method is able to generate samples over all 8 modes. In the training process, the bandwidth of the Gaussian kernel~\eqref{eq:gauss_kernel} is inialized to be $\sigma^2 = 0.1$ and decreases at a rate of $0.8^{\frac{t}{2000}}$, where $t$ is the iteration number. The generated samples are dispersed initially, and then gradually converge to the Gaussian data samples. Note that our proposed method involves a low complexity with a simple regularization term added in the loss function for the generator update.

\begin{figure}[t]
    \centering
    \begin{subfigure}[b]{0.19\textwidth}
        \centering
        \includegraphics[width=0.95\textwidth]{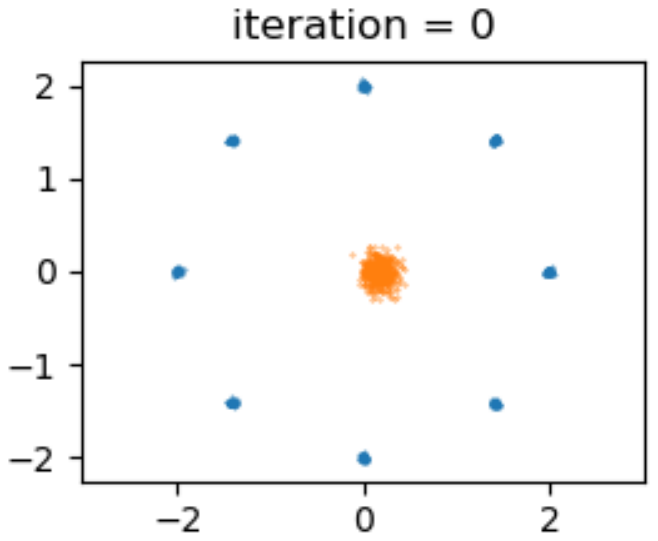}
        \caption*{Iteration 0}
	\label{fig:gauss2d_0}
    \end{subfigure}
    \begin{subfigure}[b]{0.19\textwidth}
        \centering
        \includegraphics[width=0.95\textwidth]{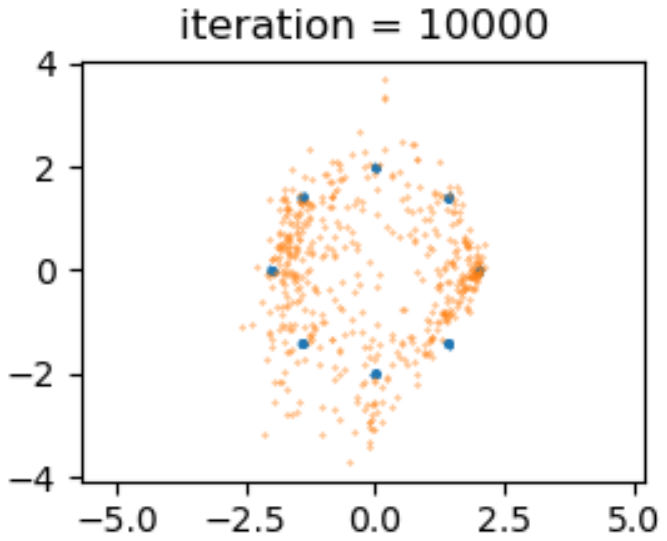}
        \caption*{Iteration 10k}
    \label{fig:gauss2d_10k}
    \end{subfigure}
\begin{subfigure}[b]{0.19\textwidth}
        \centering
        \includegraphics[width=0.95\textwidth]{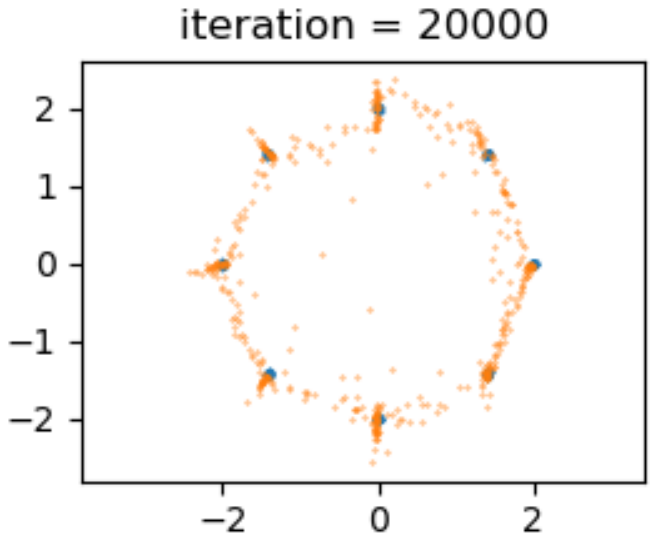}
        \caption*{Iteration 20k}
    \label{fig:gauss2d_20k}
    \end{subfigure}
\begin{subfigure}[b]{0.19\textwidth}
        \centering
        \includegraphics[width=0.95\textwidth]{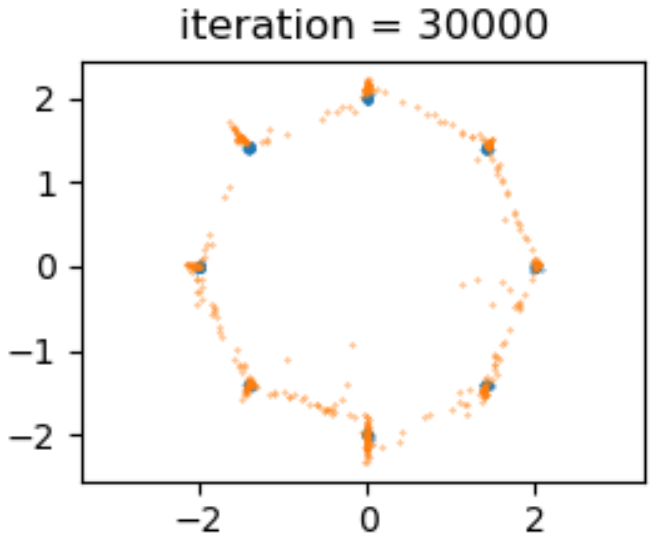}
        \caption*{Iteration 30k}
    \label{fig:gauss2d_30k}
    \end{subfigure}
    \begin{subfigure}[b]{0.19\textwidth}
        \centering
        \includegraphics[width=0.95\textwidth]{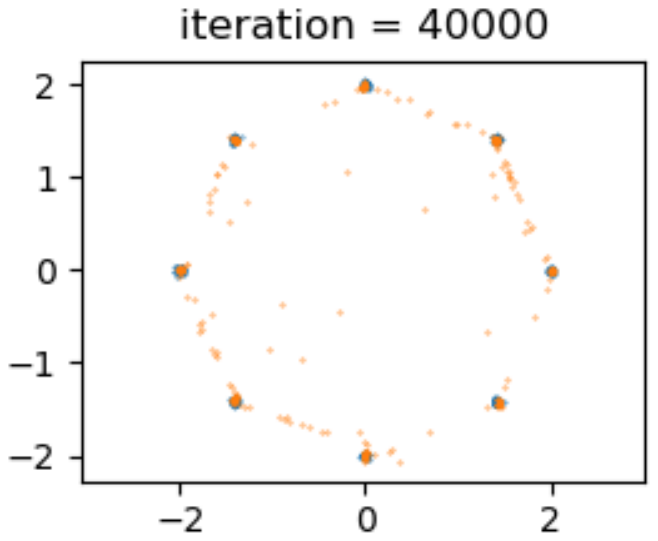}
        \caption*{Iteration 40k}
    \label{fig:gauss2d_40k}
  \end{subfigure}
    \caption{Performance of the proposed algorithm on 2D mixture of Gaussian data. The data samples are marked in blue and the generated samples are marked in orange.}
\label{fig:gauss_example}
\end{figure}

\subsection{Real-world datasets}

We also evaluate the performance of the proposed method on two real-world datasets: MNIST and CIFAR-10. Please refer to the appendix for detailed architectures. {\em Inception score}~\citep{salimans2016improved} is employed to evaluate the proposed method. It applies a pretrained inception model to every generated image to get the conditional label distribution $p(y|\mathbf x)$. The Inception score is calculated as $\exp \left( \mathsf{E}_{\bx} \left\{ \text{KL}(p(y|x) \parallel p(y) \right\} \right) $. It measures the quality and diversity of the generated images.

\subsubsection{MNIST}
The MNIST dataset contains 60000 labeled images of $28 \times 28$ grayscale digits. We train a simple LeNet-5 convolutional neural network classifier on MNIST dataset that achieves 98.9\% test accuracy, and use it to compute the inception score. The proposed method achieves an inception score of 9.8, while the baseline method achieves an inception score of 8.8. The examples of generated images are shown in Fig.~\ref{fig:mnist-cifar-examples}. The generated images are almost indistinguishable from real images.

\begin{figure}[ht]
  \centering
      \begin{subfigure}[b]{0.4\textwidth}
        \centering
        \includegraphics[width=0.9\textwidth]{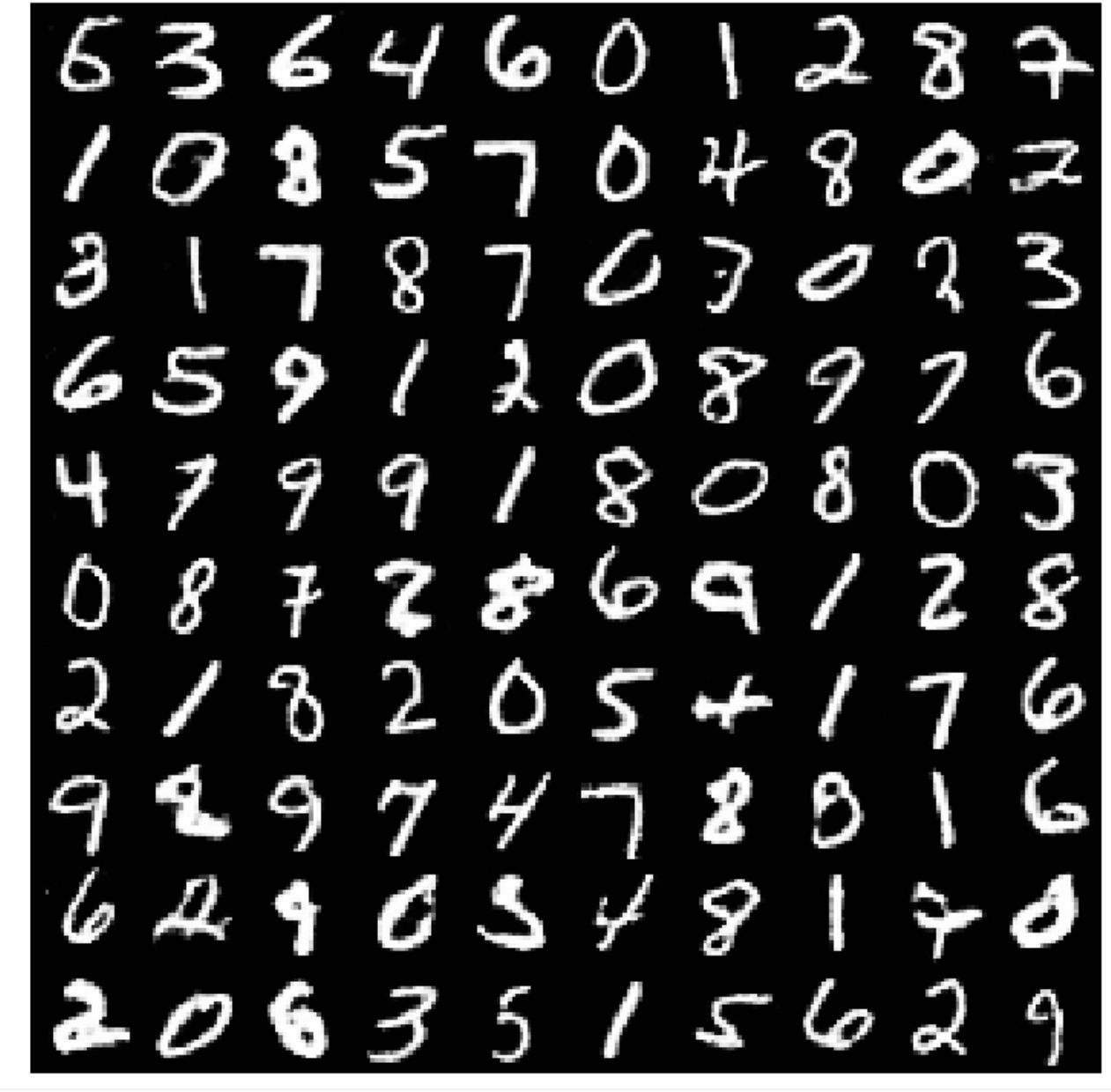}
        \caption*{MNIST}
	\label{fig:mnist_examples}
      \end{subfigure}
      \begin{subfigure}[b]{0.4\textwidth}
        \centering
        \includegraphics[width=0.9\textwidth]{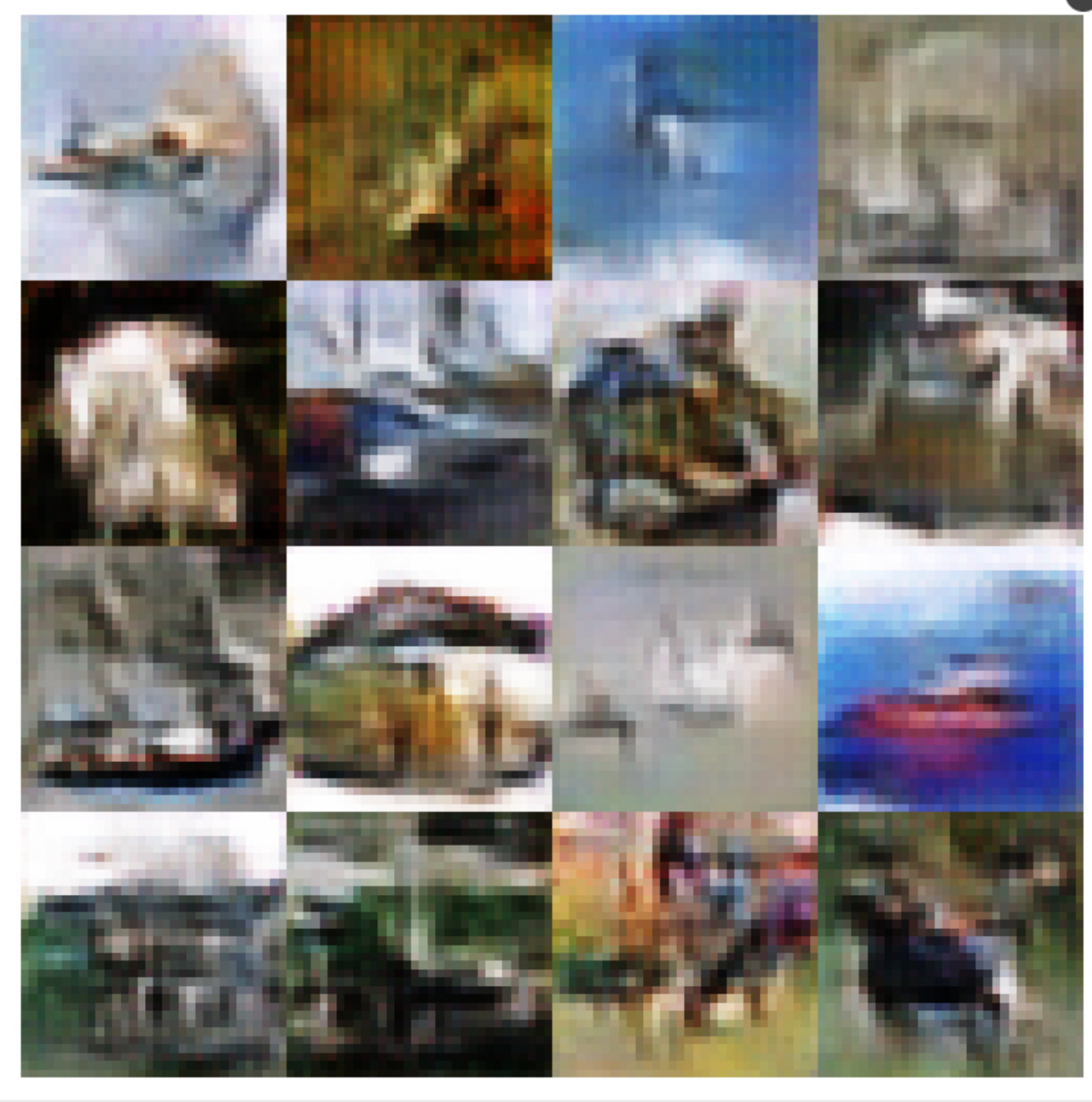}
        \caption*{CIFAR}
	\label{fig:cifar_examples}
    \end{subfigure}
  \caption{Examples of generated images using MNIST and CIFAR dataset.\label{fig:mnist-cifar-examples}}
\end{figure}

We further evaluated our algorithm on an augmented 1000-class MNIST dataset to further demonstrate the robustness of the proposed algorithm against mode collapse problem. More details of the experimental results can be found in the Appendix.

\subsubsection{CIFAR-10}
CIFAR is a natural scene dataset of $32 \times 32$. We use this dataset to evaluate the visual quality of the generated samples. Table~\ref{tab:cifar-10} shows the inception scores of different GAN models on CIFAR-10 dataset. The inception score of the proposed model is much better than the baseline method WGAN that uses similar network architecture and training method. Note that although DCGGAN achieves a better score, it uses a more complex network architecture. Examples of the generated images are shown in Fig.~\ref{fig:mnist-cifar-examples}.

\begin{table}
  \centering
    \rowcolors{2}{}{yelloworange!25}
    \addtolength{\tabcolsep}{2pt}
\begin{footnotesize}
 \begin{tabular}{c|c|}
\toprule[0.15 em]
  Method & Score \\
   \toprule[0.15 em]
   Real data & $11.24 \pm 0.16$ \\
   WGAN~\citep{arjovsky2017wasserstein} & $3.82 \pm 0.06$  \\
   MIX + WGAN~\citep{arora2017generalization} & $4.04 \pm 0.07$ \\
   Improved-GAN~\citep{salimans2016improved} & $4.36 \pm 0.04$ \\
   ALI~\citep{dumoulin2016adversarially} & $5.34 \pm 0.05$ \\
   DCGAN~\citep{radford2015unsupervised} &  $6.40 \pm 0.05$ \\
\midrule
  Proposed method  & $4.53 \pm 0.04$ \\
\bottomrule[0.1 em]
  \end{tabular}
  \vspace{2pt}
  \caption{Inception scores on CIFAR-10 dataset.}
  \label{tab:cifar-10}
\end{footnotesize}
\end{table}


\section{Conclusion}
In this paper, we propose a primal-dual formulation for generative adversarial learning.
This formulation interprets GANs from the perspective of convex optimization, and gives the optimal update of the discriminator and the generated distribution with convergence guarantee. By framing different variants of GANs under the convex optimization framework, the corresponding training algorithms can all be improved by pushing the generated distribution along the optimal direction. Experiments on two synthetic datasets demonstrate that the proposed formulation can effectively avoid mode collapse. It also achieves competitive quantitative evaluation scores on two benchmark real-world image datasets.

\bibliographystyle{iclr2018_conference}
\bibliography{all_bib}

\section{Appendix}

\subsection{Proof of Theorem~\ref{theorem:saddle_point}}
\label{append:saddle_point}

The proof of convergence for dual-driven algorithms can be found in \citep[Chapter 3]{bertsekas1989parallel}.

The primal-dual-driven algorithm for continuous time update has been studied in~\citep{feijer2010stability}. Here, we show the convergence for the discrete-time case.

We choose a step size $\alpha(t)$ that satisfies
\begin{align}\label{eq:step_size}
\alpha(t) >0, \sum_{t=1}^{\infty} \alpha(t) = \infty, \sum_{t=1}^{\infty} \alpha^2(t) < \infty.
\end{align}

Let $\bz {(t)} = [{\bx} {(t)}, \blambda {(t)} ]^T$ be a vector consisting of the primal and dual variables at the $t$-th iteration. The primal-dual-driven update can be expressed as:
\begin{align}
\bz (t+1) = \bz {(t)} + \alpha {(t)} \bT(t),
\end{align}
where
\begin{align}
T(t) = \left[
\begin{array}{ccc}
\partial_{\bx} L({\bx(t)},\blambda(t)) \\
-\partial_{\blambda} L({\bx(t)},\blambda(t))\\
\end{array}
\right]
= \left[
\begin{array}{ccc}
\partial f_0({\bx (t)})+  \sum_{i=1}^{\ell}  \lambda_i \partial f_i \left( {\bx (t)} \right) \\
-F({\bx (t)})\\
\end{array}
\right] ,
\end{align}
and
\begin{equation}
\bF({\bx}) = \left[
\begin{array}{ccc}
f_1(\bx) \\
\vdots\\
f_{\ell}(\bx)\\
\end{array}
\right]               .
\end{equation}

Since the subgradient is bounded by assumption, there exists $M>0$ such that $||\bT(\cdot)||^2_2 <M$, where $||.||_2$ stands for the $L_2$ norm.

Let $\bx^{\ast}$ be the unique saddle point and $\Phi$ be the set of saddle points of $\blambda$. For any $\blambda^{\ast} \in \Phi$, it satisfies
\begin{align}\label{eq:append_saddle}
L({\bx},\blambda^*) \leq L({\bx}^*,\blambda^*)\leq L({\bx}^*,\blambda).
\end{align}
for all $\bx$ and $\blambda$.

For any saddle point $(\bx^{\ast}, \blambda^{\ast})$, we have
\begin{align}
||\bx {(t+1)} - \bx^*||_2^2&= || \mathcal{P}_X [\bx(t) + \alpha(t) \partial_{\bx} L(\bx(t), \blambda(t)) ] -\bx^*||_2^2 \\
\label{eq:nonexpansive}& \leq || \bx(t) + \alpha(t) \partial_{\bx} L(\bx(t), \blambda(t))  -\bx^*||_2^2 \\
\nonumber & = ||\bx {(t)} - \bx^*||_2^2 +2 \alpha {(t)} \partial_{\bx} L(\bx(t), \blambda(t))(\bx {(t)}-\bx^*) \\
&\hspace{2cm} + \alpha^2 {(t)} ||\partial_{\bx} L(\bx(t), \blambda(t))||_2^2 \\
\label{eq:xt_ub} &\leq ||\bx {(t)} - \bx^*||_2^2 +2 \alpha {(t)} \partial_{\bx} L(\bx(t), \blambda(t))(\bx {(t)}-\bx^*) + \alpha^2 {(t)}  M\
\end{align}
where \eqref{eq:nonexpansive} is due to the nonexpansive projection lemma for any $X$ that contains $\bx^{\ast}$~\citep[Chapter 3]{bertsekas1989parallel}, and \eqref{eq:xt_ub}  is due to the assumption that the subgradients are upper bounded by $M$.

Similarly, we have
\begin{align}
 \label{eq:lt_ub} ||\blambda {(t+1)} - \blambda^*||_2^2\leq ||\blambda {(t)} - \blambda^*||_2^2  - 2 \alpha {(t)} \bF(\bx)^T (\blambda {(t)}-\blambda^*) + \alpha^2 {(t)} M.
\end{align}

Let $\blambda^*(t)  =\arg\min_{\blambda^{\ast} \in \Phi} ||\blambda(t)  -\blambda^*||_2 $. Define $\bz^*(t) = [\bx^*, \blambda^* (t)]$.
Since \eqref{eq:xt_ub} and \eqref{eq:lt_ub} hold for any $\blambda^{\ast} \in \Phi$, we have
\begin{align}
|| \bz(t+1) - \bz^* (t+1) ||_2^2 & = ||\bx {(t+1)} - \bx^*||_2^2 + ||\blambda {(t+1)} - \blambda^*(t+1)||_2^2 \\
&\leq ||\bx {(t+1)} - \bx^*||_2^2 + ||\blambda {(t+1)} - \blambda^*(t)||_2^2 \\
\label{eq:v_ub}&\leq || \bz(t) - \bz^* (t) ||_2^2  +2 \alpha {(t)} \bT^T (t) (\bz(t) - \bz^* (t)) + 2 \alpha^2 {(t)}  M .
\end{align}

Next we will show that $(\bx(t),\blambda(t) )$ converges to a saddle point. The intuition is that for large $t$, the second term \eqref{eq:v_ub} is less than zero and dominates over the third term, thus $\bz(t)$ will be driven to the set of saddle points.

Since $L(\bx, \lambda)$ is concave in $\bx$ and convex in $\blambda$, we have
\begin{align}
&\partial_{\bx} L(\bx(t), \blambda(t)) (\bx(t) - \bx^{\ast}) \leq L(\bx(t), \blambda(t))  - L(\bx^{\ast}, \blambda(t)) \\
&\partial_{\blambda} L(\bx(t), \blambda(t)) (\blambda(t) - \blambda^{\ast}) \geq L(\bx(t), \blambda(t))  - L(\bx(t), \blambda^* ).
\end{align}
Therefore,
\begin{align}
 \bT^T (t) (\bz(t) - \bz^* (t)) &\leq L(\bx(t), \blambda(t))  - L(\bx^{\ast}, \blambda(t)) + L(\bx(t), \blambda^* (t) )- L(\bx(t), \blambda(t)) \\
\label{eq:T_ub_0} & = L(\bx^{\ast}, \blambda^* (t)) - L(\bx^{\ast}, \blambda(t)) + L(\bx(t), \blambda^* (t) ) - L(\bx^{\ast}, \blambda^* (t)) \\
\label{eq:T_ub} & \leq 0,
\end{align}
where the last step is due to the definition of saddle point \eqref{eq:append_saddle}. Combining \eqref{eq:v_ub} and \eqref{eq:T_ub}, we have
\begin{align}
\label{eq:v_ub2} || \bz(t+1) - \bz^* (t+1) ||_2^2  \leq || \bz(t) - \bz^* (t) ||_2^2  + 2 \alpha^2 {(t)}  M .
\end{align}
Summing \eqref{eq:v_ub2} over $1 \leq t \leq n-1$, we have
\begin{align}
|| \bz(n) - \bz^* (n) ||_2^2 \leq || \bz(1) - \bz^* (1) ||_2^2  + \sum_{t = 1}^{n-1} 2 \alpha^2 {(t)}  M.
\end{align}
Since the saddle points are bounded by assumption, the initial point $||\bz(1)||_2$ is bounded and $\sum_{t=1}^{\infty} \alpha^2 {(t)}$ is bounded, $||\bz(n) ||_2$ must be bounded.

Give any $\epsilon >0$, define a neighbor of the saddle points as
\begin{align}
A_{\epsilon}= \{ \bz: ||\bz - (\bx^*, \blambda^*)||_2^2< \epsilon, \exists \blambda^* \in \Phi \}.
\end{align}

We first show that there must be infinitely many points of $\bz(t)$ that are in $A_{\epsilon}$. Suppose this does not hold, then there exists some $N_0 >0$, such that for every $t \geq N_0$, $\bz_t \notin A_{\epsilon}$. By the continuity of function $L(\cdot, \cdot)$, \eqref{eq:T_ub_0} implies that there exists some $\delta >0$ such that for every $t \geq N_0$, $ \bT^T (t) (\bz(t) - \bz^* (t))  < -\delta$. In this case, by summing \eqref{eq:v_ub} over $N_0 \leq t \leq n-1$, we have
\begin{align}
|| \bz(n) - \bz^* (n) ||_2^2 \leq || \bz(N_0) - \bz^* (N_0) ||_2^2  - 2 \sum_{t = N_0}^{n-1} \alpha {(t)}\delta + \sum_{t = N_0}^{n-1} 2 \alpha^2 {(t)}  M.
\end{align}
Note that $|| \bz(N_0) - \bz^* (N_0) ||_2^2$ is bounded. By the choice of the step size \eqref{eq:step_size}, we have $|| \bz(n) - \bz^* (n) ||_2^2$ tends to $-\infty$, which is contradicted with the fact that $|| \bz(n) - \bz^* (n) ||_2^2 \geq 0$. Therefore, there are infinitely many $\bz(t)$ in $A_{\epsilon}$.

Consequently, we can find a large enough $N_1$ such that $2 M \sum_{t = N_1}^{\infty} \alpha^2 (t) \leq \epsilon$ and ||$\bz(N_1) - \bz^{\ast} (N_1) || \leq \epsilon$. Summing \eqref{eq:v_ub2} over $N_1 \leq t \leq n-1$ we have
\begin{align}
|| \bz(n)  - \bz^{\ast}(n) ||_2^2 &\leq  || \bz(N_1)  - \bz^{\ast}(N_1) ||_2^2+ 2M  \sum_{t = N_1}^{n-1}  \alpha^2 {(t)}   \\
& \leq 2 \epsilon.
\end{align}

In other words, for any $\epsilon >0$, there exists $N_1 > 0$ such that for all $t \geq N_1$, $||\bz(t) - \bz^{\ast} (t)||_2 \leq 2 \epsilon$. Since it holds for all $\epsilon$, it implies that there are infinitely many $\bz(t)$ that belong to the saddle points.

The set of saddle points is compact by assumption. By the Bolzano–Weierstrass theorem, there must exist a subsequence of $\{\bz(t_n) \}$ that converges to a saddle point $\bz_0$. For such subsequence, there exists some large enough $n_0$ such that $t_{n_0} \geq N_1$ and $||\bz(t_n) - \bz_0||^2_2 \leq \epsilon$, for every $n \geq {n_0}$. Since \eqref{eq:v_ub2} holds for any saddle point $\bz^* (t)$, we replace $\bz^*(t)$ by $\bz_0$ and sum \eqref{eq:v_ub2} over $t_{n_0} \leq t \leq n-1 $ to obtain
\begin{align}
|| \bz(n)  - \bz_0 ||_2^2 &\leq  || \bz(t_{n_0})  -\bz_0 ||_2^2+ 2M  \sum_{t = t_{n_0}}^{n-1}  \alpha^2 {(t)}   \\
& \leq || \bz(t_{n_0})  -\bz_0 ||_2^2+ 2M  \sum_{t = N_1}^{n-1}  \alpha^2 {(t)} \\
\label{eq:z_ub} & \leq 2 \epsilon.
\end{align}
This means that for any $\epsilon$, there exists $N_2 = t_{n_0}$ such that for every $t \geq N_2$, $||\bz(t) - \bz_0||_2^2 \leq \epsilon$. That concludes the proof that $\bz(t)$ converges to a saddle point.

\subsection{1000 Class MNIST dataset}
We use an augmented version of MNIST dataset similar to the experiment conducted in~\citep{che2016mode,metz2016unrolled}.
Each image in this dataset is created by randomly choosing three letter images from MNIST dataset. The three images are stacked as the R,G, and B channels into a color image. This dataset has 1000 distinct modes, corresponding to each combination of the ten MNIST classes in each channel.

We train a GAN and a classifier on this dataset. For each generated image, we apply the classifier to determine its label. We compute two metrics on this dataset. the number of modes the GAN generates, and the inception score computed using the classifier. We use the same architecture as~\citep{metz2016unrolled} in our experiment. The result is shown in Table~\ref{tab:1000-mnist}.

We find that the proposed method achieves a much better performance than unrolled GAN with 5 steps and comparable performance to unrolled GAN with 10 steps in terms of the number of predicted modes. However, since our method does not involve unrolling step, it is much more computationally efficient. Notice that although \citep{che2016mode} generates much more modes, it uses a more complex architecture, and such architecture is known to contribute to mode collapse avoidance on the 1000 Class MNIST dataset~\citep{metz2016unrolled}. Compared to the baseline that does not use the second regularization term in \eqref{eq:update_gen}, the proposed method achieves better inception score, and it generates more modes.

\begin{table}
\centering
\caption{Performance comparison on augmented MNIST dataset.}
\begin{tabular}{|c|c|c|c|}
  \hline
  Method & Modes generated & Inception Score \\
  \hline
  \citep{metz2016unrolled} 5 steps & 732 & NA  \\
  \citep{metz2016unrolled} 10 steps & 817 & NA  \\
  \citep{che2016mode} & 969 & NA \\
  Baseline & 526 & 87.15 \\
  Proposed & 827 & 155.6 \\
  \hline
\end{tabular}
\end{table}\label{tab:1000-mnist}

\subsection{Toy Example Training Details}
\label{append:toy}

In the toy example, both the generator and the discriminator has only one ReLU hidden layer with 64 neurons. The output activation is sigmoid function for GAN and ReLU for WGAN. For WGAN, the parameters are clipped in between $[-1,1]$, and the networks are trained with Root Mean Square Propagation (RMSProp) with a learning rate of 1e-4. For GAN, the networks are trained with Adam with a learning rate of 1e-4. The minibatch size is 32. The bandwidth parameter for the Gaussian kernel is initialized to be $\sigma=0.5$ and then is changed to 0.1 after 2000 iterations.

\subsection{2D Mixture Gaussian Data Training Details}

We use the network structure in~\citep{metz2016unrolled} to evaluate the performance of our proposed method. The data is sampled from a mixture of 8 Gaussians of standard deviation of 0.02 uniformly located on a circle of radius 2. The noise samples are a vector of 256 independent and identically distributed (i.i.d.) Gaussian variables with mean zero and standard deviation of 1.

The generator has two hidden layers of size 128 with ReLU activation. The last layer is a linear projection to two dimensions. The discriminator has one hidden layer of size 128 with ReLU activation followed by a fully connected network to a sigmoid activation. All the biases are initialized to be zeros and the weights are initalilzed via the ``Xavier” initialization~\citep{glorot2010understanding}. The training follows the primal-dual-driven algorithm, where both the generator and the discriminator are updated once at each iteration. The Adam optimizer is used to train the discriminator with 8e-4 learning rate and the generator with 4e-4 learning rate. The minibatch sample number is 64.

\subsection{MNIST Training Details}
For MNIST dataset, the generator network is a deconvolutional neural network. It has two fully connected layer with hidden size $1024$ and $7 \times \times 7 \times 128$, two deconvolutional layers with number of units $64, 32$, stride $2$ and deconvolutional kernel size $4\times 4$ for each layer, respectively, and a final convolutional layer with number of hidden unit 1 and convolutional kernel $4 \times 4$.. The discriminator network is a two layer convolutional neural network with number of units $64, 32$ followed by two fully connected layer of hidden size $1024$ and $1$. The input noise dimension is $64$.

We employ ADAM optimization algorithm with initial learning rate $0.01$ and $\beta=0.5$.

\subsection{CIFAR Traing Details}
For CIFAR dataset, the generator is a 4 layer deconvolutional neural network, and the discriminator is a 4 layer convolutional neural network. The number of units for discriminator is $[64, 128, 256, 1024]$, and the number of units for generator is $[1024, 256, 128, 64]$. The stride for each deconvolutional and convolutional layer is two.

We employ RMSProp optimization algorithm with initial learning rate of $0.0001$, decay rate $0.95$, and momentum $0.1$.

\end{document}